\DeclareMathOperator*{\mini}{min.}
\definecolor{light-gray}{gray}{0.95}
\definecolor{dark-gray}{gray}{0.5}
\definecolor{mygray}{gray}{0.75}
\newcommand{\BIN}{\begin{bmatrix}}
\newcommand{\BOUT}{\end{bmatrix}}
\newcommand{\uf}{\underline{f}}
\newcommand{\uv}{\underline{v}}
\definecolor{orange}{rgb}{0.99,0.69,0.07}
\definecolor{lightgray}{gray}{0.85}
\definecolor{light-gray}{gray}{0.95}
\definecolor{dark-gray}{gray}{0.5}
\tikzset{cross/.style={cross out, draw=black, minimum size=2*(#1-\pgflinewidth), inner sep=0pt, outer sep=0pt},
cross/.default={1pt}}
 \newcommand\fs@spaceruled{\def\@fs@cfont{\bfseries}\let\@fs@capt\floatc@ruled
   \def\@fs@pre{\vspace{5pt}\hrule height.8pt depth0pt \kern2pt}%
   \def\@fs@post{\kern2pt\hrule\relax}%
   \def\@fs@mid{\kern2pt\hrule\kern2pt}%
   \let\@fs@iftopcapt\iftrue}
\newtheorem{theorem}{Theorem}
\newtheorem{definition}{Definition}[section]
\title{\LARGE \bf Time-Optimal Control via Heaviside Step-Function Approximation}
\author{Kai Pfeiffer$^1$, Quang-Cuong Pham$^{1,2}$%
	\thanks{$^1$The authors are with the School of Mechanical and Aerospace Engineering, Nanyang Technological University, Singapore.}%
	\thanks{$^{2}$HP-NTU Digital Manufacturing Corporate Lab, Nanyang Technological University, Singapore.}
	\thanks{This research was conducted in collaboration with HP Inc. and supported by National Research Foundation (NRF) Singapore and the Singapore Government through the Industry Alignment Grant (I1801E0028).}
}
\begin{document}
	\maketitle 
	\thispagestyle{empty}
	\pagestyle{empty}
	
	\begin{abstract}
		Least-squares programming is a popular tool in robotics due to its simplicity and availability of open-source solvers. However, certain problems like sparse programming in the $\ell_0$- or $\ell_1$-norm for time-optimal control are not equivalently solvable. In this work, we propose a non-linear hierarchical least-squares programming (NL-HLSP) for time-optimal control of non-linear discrete dynamic systems. We use a continuous approximation of the heaviside step function with an additional term that avoids vanishing gradients. We use a simple discretization method by keeping states and controls piece-wise constant between discretization steps. This way, we obtain a comparatively easily implementable NL-HLSP in contrast to direct transcription approaches of optimal control. We show that the NL-HLSP indeed recovers the discrete time-optimal control in the limit for resting goal points. We confirm the results in simulation for linear and non-linear control scenarios.
	\end{abstract}

	\section{Introduction} 
	
	Time-optimal control (TOC) can be considered a powerful tool when as fast as possible task fulfillment of a dynamic system is desired. However, optimal control methods based on direct methods for problem discretiziation are not easily implementable~\cite{gpops} or their solution relies on proprietary software~\cite{gpopsii}. In this work we propose a non-linear hierarchical least-squares programming (NL-HLSP) that can be easily implemented ($\sim$20 lines of code if a non-linear solver and a task library are available) and that is provably convergent to the true discrete time-optimal control (DTOC) in the limit and for resting goal points (that is the system is able to remain at these points; for example a robot is not able to remain at a Cartesian point if at the same time the desired velocity is not zero). Furthermore, our method is applicable to dynamics of any form (for example the inverse dynamics equations popular in robotics due to their computational efficiency~\cite{carpentier2018}), unlike other trajectory optimizers~\cite{adrlCT} or differential dynamic programming methods~\cite{pavlov2021}.
	
	Optimal control is the problem of identifying a control $u$ such that a dynamic system with states $x$ is driven to a desired goal state $f_d$ while minimizing a defined cost on control $u$, the state $x$ or linear / non-linear functions of it. A desired goal is thereby a state that the system should end up in while considering constraints on the controls $u$ and / or the states $x$. This boils down to a constrained optimization problem with Ordinary Differential Equations (ODE), that describe the dynamic system behavior, and algebraic equations, that describe physical relations~\cite{biegler2002}.
	
	Analytic solutions to certain specifications of the continuous optimal control problem have been proposed.
	A solution to the linear quadratic regulator (LQR) with linear dynamics can be found in~\cite{kirk1970}. The authors in~\cite{goebel2005} propose a solution with limits on the controls.
	However, for complicated non-linear systems, and especially constrained problems, analytical solutions are usually too difficult or impossible to formulate. Instead, in order to be machine solvable, the original optimal control problem can be discretized and then be solved as a non-linear programming (NLP), commonly referred to as direct transcription method~\cite{Biral2016}. Different methods have been proposed which differ in the polynomials and collocation points (at which the functions are evaluated) that are used to approximate the continuous controls and states.
	The authors in~\cite{Shivakumar2008} use Legendre-Gauss-Radau quadrature as it allows easy constraint formulation and is shown to possess high stability for systems of high order ODE's.
	Legendre-Gauss-Lobatto points however provide the smallest interpolation error in a least-squares sense~\cite{Benson2006}.
	The open-source matlab implementation GPOPS of a direct transcription method is described in~\cite{gpops}. However, the corresponding C++ implementation~\cite{gpopsii} is proprietary and furthermore relies on the proprietary NLP solver SNOPT~\cite{snopt}.
	
	A specific form of optimal control is time-optimal control~\cite{pontryagin1987}. Here, the cost function specifically aims to minimize the time at which a desired goal state is reached. The resulting control usually exhibits a `bang-bang' profile as a limit control for given bounds on the control inputs~\cite{bellman1956}.
	
	TOC is a complex problem, especially when the controls and states are considered at the same time, for example for whole-body robot trajectory optimization~\cite{Stryk1998}. 	
	One way to reduce its complexity is to reduce DTOC to a simpler problem by only considering the controls while the states are assumed to be known. This method, referred to as time-optimal path parametrization, has seen significant leaps in the recent past in terms of accuracy, convergence and computational complexity~\cite{Pham2018c,Pham2018b}. However, how to choose the states is oftentimes not entirely clear but might be taken for example from kinematic solutions or motion capture.
	
	Aside direct collocation methods, optimal control can also be discretized to reasonable accuracy for example by the explicit Euler-method~\cite{chen2012,Flasskamp2019,meduri2022}. In this case TOC can be considered a mixed-integer non-linear programming~\cite{Belotti2013} where the discrete optimal time is represented as an integer and the corresponding states and controls are continuous. Such problems, even in the linear case~\cite{Marcucci2020} or for example expressed as optimization problems with $\ell_0$-norm cost functions for sparsity enhancing linear regression~\cite{Candes2007}, are very expensive to solve due to their combinatorial nature. A simplified approach only considering a robot's end-effector position in a traveling salesman scenario has been treated in~\cite{Gentilini2013}.
	Another approach is to turn the $\ell_0$- into a weighted $\ell_1$-norm optimization problem. Thereby, different weights have been proposed in the literature~\cite{Candes2007,Cai2014,Zhou2016} which aim to represent the original problem as closely as possible. This approach has been borrowed for discrete linear dynamic systems for example in~\cite{chen2012}. A similar approach is followed in~\cite{Doetlinger2014}, confirming that $\ell_1$-norm optimization can provably retrieve an equivalent solution to DTOC in the linear case. 
	The authors in~\cite{Yang2020} propose a similar method based on a sliding window but formulated as an $\ell_2$-norm optimization problem. The window is iteratively shifted until the time-optimal control is contained within.	
	
    Control problem formulation as least-squares problems ($\ell_2$-norm) is oftentimes appropriate and sufficient in robot control and planning~\cite{vaillant:auro:2016,pfeiffer2017} and enjoys great popularity because of its simplicity and availability of open-source solvers~\cite{ceres}. A special formulation of least-squares programming is hierarchical least-squares programming~\cite{Escande2014}. Here, constraints and objectives can be further prioritized within such that a more efficient robot control can be achieved~\cite{pfeiffer2023}. Especially the separation of regularization tasks is oftentimes very helpful as the actual objectives can be fulfilled to higher accuracy as has been demonstrated in~\cite{pfeiffer2018}.
    However, there are problem settings that are not equivalently solvable in the $\ell_2$-norm. An example would be the above mentioned problem in the $\ell_1$-norm for DTOC~\cite{chen2012,Doetlinger2014}. In this work we propose a provably equivalent approximation of linear and non-linear DTOC in the $\ell_2$-norm based on the approximate heaviside function~\cite{berg1929heaviside}.
    The heaviside function has been treated in different works. The authors in~\cite{Wang2017} use a heaviside step-function approximation in order to indicate mechanical stress violations in topology optimization. The authors in~\cite{Zhou2020} directly optimize over the discontinuous Heaviside step-function and propose an appropriate Newton's method to do so. 
    
    Our contribution is therefore threefold:
    \begin{itemize}
    	\item Approximate discrete time-optimal control (ADTOC) as an easily implementable NL-HLSP that can be solved by off the shelf non-linear least-squares solvers.
       	\item Provable equivalence to true DTOC.
    	\item Applicability to both linear and non-linear systems.
    \end{itemize}
    
    First, we outline continuous TOC and our discretization of it, see Sec.~\ref{sec:prob}. Secondly, we describe ADTOC and the weight function based on the heaviside step-function approximation (Sec.~\ref{sec:aprxtopm}). In Sec.~\ref{sec:conv} we show the equivalence to  true DTOC in the limit and for resting goal points. Lastly, the algorithm is evaluated for linear and non-linear discrete dynamic systems (Sec.~\ref{sec:eval}).
	
	\section{Problem definition} 
	\label{sec:prob} 
	
	In this work, we consider TOC of the form
\begin{align}
	\mini_{x,u,T} \qquad &\int^T_0 dt \label{eq:conttopm}\tag{TOC}\\
	\text{s.t}\qquad 
	&\dot{x}(t) = f_{\text{dyn}}(x(t),u(t)) \nonumber\\
	& f_{\text{ter}}(x(T)) = 0\nonumber\\
	&x(t),u(t) \in \Omega\nonumber
\end{align}
The states $x(t)\in\mathbb{R}^{n_x}$ and $u(t)\in\mathbb{R}^{n_u}$ are continuous in time $t$.
The goal is to minimize the time $T$ that it takes to reach the terminal (ter.) state $f_{\text{ter}}(x(T)) = 0$.
We define
\begin{align}
	f_{\text{ter}}(x(t)) \coloneqq f_{\text{task}}(x(t)) - f_d(t) 
\end{align}
where $f_{\text{task}}(x(t))$ is some task function and  $f_d$ is the corresponding desired value. 
The dynamics $f_{\text{dyn}}$ determine the behavior of the control system by some possibly non-linear relationship. $\Omega$ is a constraint polytope which both the states and controls are constrained to.

The problem can be discretized for example by direct transcription methods. These methods provide an equivalent solution to the original continuous optimal control problem~\eqref{eq:conttopm} with strong convergence and stability properties~\cite{Shivakumar2008}. However, they are not easily implementable or rely on proprietary software. We therefore discretize our problem by the explicit Euler method by assuming  piece-wise constant states and controls over each discretization instance $i = 0,\dots, N-1$~\cite{chen2012,Flasskamp2019,meduri2022} (in contrast to higher order polynomials as in direct transcription methods).  We refer to the following optimization problem as the `true discrete time-optimal control' (DTOC) throughout the remainder of the paper:
\begin{align}
	\mini_{x,u,N^*} \qquad &T = N^*\Delta t
	\label{eq:topm}\tag{DTOC}\\
	\text{s.t}\qquad 
	&f_{\text{dyn}}(x(0), u(0),x(1),...,u(N-1),x(N)) = 0 \nonumber\\
	& f_{\text{ter}}(x(N^*+1),x) = 0\nonumber\\
	& x(0) = x_0\nonumber\\
	&x(i+1),u(i) \in \Omega\nonumber, \quad i = 0,\dots, N-1\nonumber
\end{align}
$\Delta t$ is the discretization time step. 
$N$ is the number of collocation points of the discrete problem. $N^*$ is the last time step at which $f_{\text{ter}}(x(N^*),x)\neq 0$ while at the consequent step we have $f_{\text{ter}}(x(N^*+1),x) = 0$. The dependence $f_{\text{ter}}(x(i),x)$ indicates that $f_{\text{ter}}$ is necessarily dependent on $x(i)$ but possibly also from $x$ (for example in case of finite differences or Euler integration schemes).
The discrete states $x$ and controls $u$ are defined as
\begin{align}
	&x \coloneqq \BIN x(1)^T& \dots&  x(N)^T\BOUT^T \\
	&u \coloneqq \BIN u(0)^T& \dots& u(N-1)^T\BOUT^T\nonumber
\end{align}
The initial state is given by $x(0) = x_0\in\mathbb{R}^{n_x}$. Note that we have generalized the dynamics $f_{\text{dyn}}$ without explicit dependence on the state derivatives. This can be advantageous in robotics if the Newton-Euler equations are used (see also Sec.~\ref{sec:manip}).

\ref{eq:topm} can be solved by means of $\ell_0$ optimization. However, due to its combinatorial nature an approximation based on the $\ell_1$-norm is usually solved instead. Different weight functions have been proposed which aim to approximate the original $\ell_0$-problem as close as possible. However, such $\ell_1$-problems can not be equivalently solved by least-squares programming. The reason is that the Hessian of the Taylor approximation of the problem is always positive-definite and not zero as required by $\ell_1$ programming ($\ell_1$ problems can be for example be solved by quadratic programming, or specialized linear programming solvers). Least-squares programming is a very popular tool due to its simplicity and availability of open-source solvers. Furthermore, it allows for hierarchical optimization, a tool that has gained considerable attention in the recent past, especially in the context of robot control. In this work we therefore propose a  NL-HLSP for TOC which is applicable both to linear and non-linear dynamic systems and task functions.

\section{Approximate discrete time-optimal control via heaviside step-function approximation}
\label{sec:aprxtopm}

In this section we recast \ref{eq:topm} into an NL-HLSP of the following form:
	\begin{align}
	\min_{{z},{v}_l} \quad & \frac{1}{2} \left\|{v}_l\right\|^2 \qquad\quad\hspace{8pt}l = 1,\dots,p
	\label{eq:NL-HLSP}\tag{NL-HLSP}\\
	\mbox{s.t.} \quad & {f}_l({z})  \leqq \hspace{3pt} v_l\nonumber\\
	&\uf_{l-1}({z})  \leqq  {\uv}_{l-1}^* \nonumber
\end{align}
Each of the $p$ priority levels contains constraints of the form $f_l(z) \leqq v_l$. The symbol $\leqq$ indicates both equality and inequality constraints. $v_l$ is a slack variable of level $l$ that is minimized in a least-squares sense while being subject to the constraints of the levels $1$ to $l$ while not increasing the already identified optimal slacks $\uv_{l-1}^* = \BIN v_{1}^{*T} & \cdots & v_{l-1}^{*T}\BOUT^T$ of the previous levels $1$ to $l-1$~\cite{escande2013companion,pfeiffer2018}. This problem can be solved by sequential hierarchical least-squares programming (S-HLSP) as proposed in~\cite{pfeiffer2023,pfeiffer2023b}.

Concretely, ADTOC as an \ref{eq:NL-HLSP} is given in Tab.~\ref{tab:topmaprx}. The variable vector is chosen as $z = \BIN x^T & u^T & N^*\BOUT$.
\begin{table}[tp!]
	\vspace{5pt}
	\centering
	\setlength{\tabcolsep}{1pt}
	\setlength{\extrarowheight}{1pt}
	\begin{tabular}{@{} ccccc @{}}  
		\toprule
		$l$ & $f_l(x,u,N^*) \leqq v_l$ \\
		\midrule
		& with $i = 0,\dots,N-1$\\
		1 & 
		$\BIN 
		u_i - u_{\max}\\
		u_{\min} - u_i\\
		\BOUT
		\leq v_{1,i+1}$ \\ 
		1 & 	$f_{\text{dyn}}(x_0,u_0,...,u_{N-1},x_{N}) = v_1 $\\
		2 & $\BIN 
		N^*\Delta t\\
		w(0,N^*) f_{\text{ter}}(x(0),x)\\
		\vdots \\
		w(N-1,N^*) f_{\text{ter}}(x(N-1),x)
		\BOUT=v_2$\\
		3 & Regularization of $x$, $u$, $N^*$\\
		\bottomrule
	\end{tabular}
	\caption{ADTOC}
	\label{tab:topmaprx}
\end{table}
For better readability we abbreviate variables $b$ at time $t$ as $b_t\coloneqq b(t)$.
ADTOC (Tab.~\ref{tab:topmaprx}) has three levels $p=3$. On the first priority level we define control constraints with minimum and maximum control values $u_{\text{min}}$ and $u_{\text{max}}$, respectively. Furthermore, the dynamics constraints $f_{\text{dyn}}$ are defined.

The second level contains the approximation of TOC. 
First, it contains the time minimization term $N^*\Delta t$. Furthermore,
for each discretization step $i=1,\dots,N$ we introduce the terminal constraint $f_{\text{ter}}(x(i))$ weighted by the function 
\begin{equation}
	w(i,N^*) =
	h(i,N^*)(i-N^*+1)^k
	\label{eq:w}
\end{equation}
The term $(i-N^*+1)^k$ is outlined further down this section. 
$h(i,N^*)$ is a differentiable approximation of the heaviside step function~\cite{berg1929heaviside}
\begin{equation}
	h(i,N^*) =
	(0.5 + 0.5 \tanh(k(i - N^*)))
\end{equation} 
The parameter $k>0$ regulates the steepness of the step approximation. Specfically, for $k\rightarrow \infty$ we have $h(i<N^*,N^*)=0$, $h(N^*,N^*)=0.5$ and $h(i>N^*,N^*)=1$ for some $i\in\mathbb{R}$ and $N^*\in\mathbb{R}$. The same holds for the function $w$. 
Instead of using the infinite limit we define a more numerically sensible threshold $k_{\epsilon}$ as follows.
\begin{definition}[Limit $k_{\epsilon}$]
	\label{def:limitk}
	The limit $k_{\epsilon}$ fulfills the bracket condition $h(i-1,i-0.5) = \epsilon$ and $h(i,i-0.5) = 1 - \epsilon$ for any $i\in\mathbb{Z}_{\geq 0}$. $k_{\epsilon}$ is then given by
	\begin{equation}
		k_{\epsilon} = 2\text{arctanh}\left(\frac{0.5 - \epsilon}{0.5}\right) 
		\label{eq:keps}
	\end{equation}
	$\epsilon$ signals a numerical threshold. Notably, $k_{\epsilon}$ is independent of $\Delta t$.
\end{definition}
Note that while $k_{\epsilon}$ is independent of $\Delta t$ we have $k\rightarrow\infty$ for $\epsilon\rightarrow 0$. The parameter $\epsilon$ therefore needs to be chosen to a reasonable value compromising between the desired accuracy of ADTOC and numerical stability. Throughout this work we speak of `vanishing' functions even though this truly applies only in the limit $\epsilon\rightarrow 0$. 

Let us now further explain the term $(i-N^*+1)^k$ in the weight function $w(i,N^*)$~\eqref{eq:w} by looking at the derivative of $w(i,N^*)$ with respect to ${N}^*$
\begin{align}
	&\nabla_{N^*} w(i,N^*) = \label{eq:nabw}\\ 
	&\nabla_{N^*}h(i,N^*)(i-N^*+1)^k - kh(i,N^*)(i-N^*+1)^{k-1} \nonumber
\end{align}	
The derivative of the heaviside approximation is given by
\begin{align}
	\nabla_{N^*}h(i,N^*)= & -0.5 k    (1 - \tanh(k(i - N^*))^2) \label{eq:nabh}
\end{align}
It can be observed that for $k = k_{\epsilon}$ the derivative $\nabla_{N^*}h(i,N^*)$ vanishes for any $i\in\mathbb{Z}\setminus N^*$. In contrast, $\nabla_{N^*} w(i,N^*) \neq 0$ for any $i\in\mathbb{Z}_{>N^*}$ due to the second term of~\eqref{eq:nabw}.
This has numerical advantages as we further explain in Sec.~\ref{sec:cohn}.

\begin{figure}[tp!]
	\vspace{5pt}
	\includegraphics[width=0.8\columnwidth]{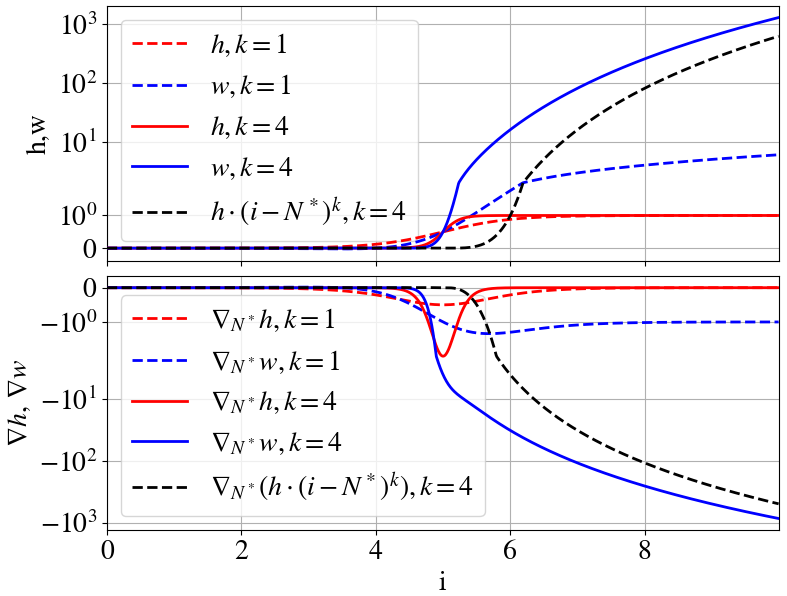}
	\centering
	\caption{Function and gradient values of $h(i,N^*)$, $w(i,N^*)$ and $h(i,N^*)(i-N^*)$ for $N^*=5$ and $k=1$ and $k=4$.}
	\label{fig:weight}
\end{figure}
A depiction of $w$ and $h$ and its derivatives is given in Fig.~\ref{fig:weight}.

It can be observed that both the time minimization term and the terminal constraints are not separated as constraint and objective as for \ref{eq:topm}.  
This means that we have turned the original TOC problem into a weighted optimization problem with weights $w$ on the terminal constraints $f_{\text{ter}}$. However, we show in Sec.~\ref{sec:conv} that with our choice of weights we can indeed recover \ref{eq:topm} in the limit $\Delta t\rightarrow 0$, $k=k_{\epsilon}$ and for resting goal points.

The hierarchical problem formulation allows us to define the regularization task on a separate level $l=3$ from the actual objective of finding the time-optimal control on level 2. The above problem can also be defined as a conventional constrained least-squares programming ($p=2$) by defining the (often desirable) regularization task on the second level with a small weight. However, this would negatively influence the task performance of the second level~\cite{pfeiffer2018}.

\section{Convergence of the approximate discrete time-optimal control problem}
\label{sec:conv}


In this section we investigate the convergence behavior of ADTOC (Tab.~\ref{tab:topmaprx}). Similarly to~\cite{biegler2002} we proceed from a numerical viewpoint. We show that the true time-optimal control $\hat{u}$ and $\hat{x}$ poses a global minimum / first-order accumulation point / Karush-Kuhn-Tucker (KKT) point to Tab.~\ref{tab:topmaprx}. For this we look at following simplified optimization problem by assuming $x,u\in\Omega$ where $\Omega$ is the feasible constraint polytope with respect to the control, state and dynamics constraints:
\begin{align}
	\mini_{x,u\in\Omega,N^*} \qquad &\frac{1}{2}\left\Vert  \BIN N^{*}\Delta t\\
	w(0,N^*) f_{\text{ter}}(x)\\
	\vdots \\
	w(N-1,N^*) f_{\text{ter}}(x)
	\BOUT \right\Vert^2
	\label{eq:topmls}
\end{align}
We assume that $N > \hat{N}^*$ is reasonably chosen such that a feasible solution to the TOC exists. Therefore, in the following we  only consider the unconstrained version of~\eqref{eq:topmls} (we could argue that for example controls at the limits are simply removed from the optimization problem in the sense of null-space methods~\cite{pfeiffer2023b}).
The corresponding first order optimality conditions $K_{x,u,N^*}\coloneqq\nabla_{x,u,N^*} \mathcal{L}=0$ derived from the Lagrangian $\mathcal{L} \coloneqq \frac{1}{2}\Vert \BIN N^{*}\Delta t & \cdots & w(N-1,N^*) f_{\text{ter}}(N-1)^T \BOUT^T \Vert^2$ are
\begin{align}
&\BIN
K_x \\
K_u \\
K_{N^*}
\BOUT
 \hspace{-2.5pt}\coloneqq \hspace{-2.5pt}
\BIN
 \sum_{i=0}^{N-1}w(i,N^*)^2f_{\text{ter}}(x(i))^T\nabla_x f_{\text{ter}}(x(i))\\
\sum_{i=0}^{N-1}w(i,N^*)^2f_{\text{ter}}(x(i))^T\nabla_u f_{\text{ter}}(x(i))\\	
	N^{*}\Delta t^2 +  \Sigma(x,N^*)
	\BOUT = 0\nonumber
\end{align}
$\Sigma(x,N^*)$ is defined as
\begin{equation}
	\Sigma(x,N^*)  \coloneqq \sum_{i=0}^{N-1} 
	w(i,N^*)\nabla_{N^*}w(i,N^*) \Vert f_{\text{ter}}(x(i))\Vert^2  \label{eq:sigma}
\end{equation}

We now proceed as follows. We first show that the solution $N^*$ is contained within the horizon $0\leq N^* \leq N-1$, see Sec.~\ref{sec:cohn}. This is important for algorithm coherence since a negative $N^*$ would be an irrational solution. Similarly, $N^* > N-1$ would mean that the terminal constraint $f_{\text{ter}}$ vanishes from the optimization problem. We then proceed by showing the convergence of ADTOC  (Tab.~\ref{tab:topmaprx}) to \ref{eq:topm} in the limit and for resting goal points, see Sec.~\ref{sec:convadTOC}.

\subsection{Coherent solution $0\leq N^* \leq N-1$}
\label{sec:cohn}

We can make following statement about $\Sigma(x,N^*)$:

\begin{theorem}
	\label{th:neg}
The sum $\Sigma(x,N^*)<0$ is negative for any $N^*\in\mathbb{R}$ and $k=k_{\epsilon}$.
\end{theorem}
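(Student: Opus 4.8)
The plan is to expand the summand $w(i,N^*)\,\nabla_{N^*}w(i,N^*)$ of \eqref{eq:sigma} and then determine the sign of $\Sigma$ by a region-by-region dominance argument over the index $i$. Substituting \eqref{eq:nabw} and collecting terms, each summand factorizes as
\[
	w(i,N^*)\,\nabla_{N^*}w(i,N^*) = h(i,N^*)\,\nabla_{N^*}h(i,N^*)\,(i-N^*+1)^{2k} - h(i,N^*)^2\,(i-N^*+1)^{2k-1}.
\]
Since the residual factor $\Vert f_{\text{ter}}(x(i))\Vert^2 \geq 0$ never flips a sign, it suffices to control the sign of this expression weighted by the nonnegative residual norms, and to show the weighted sum is strictly negative.

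First I would argue that for $k=k_{\epsilon}$ the gradient-of-$h$ term is negligible away from $N^*$. From \eqref{eq:nabh} the factor $1-\tanh^2(k(i-N^*))$ multiplying $\nabla_{N^*}h$ is exponentially small whenever $|i-N^*|\geq 0.5$, which by Definition~\ref{def:limitk} holds for every integer $i\neq N^*$. Hence the dominant contribution of each summand is the second, power-derivative term $-h(i,N^*)^2(i-N^*+1)^{2k-1}$. Crucially, for $i\geq N^*$ the retained first term is itself negative (there $h(i,N^*)>0$ and $\nabla_{N^*}h<0$), so neglecting it only strengthens the negative side; the two terms differ in sign only for $i<N^*$, where both are in any case exponentially suppressed by $h(i,N^*)^2$.

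Next I would partition the index set using the step-function shape of $h$. For $i\geq N^*$ one has $(i-N^*+1)^{2k-1}>0$ and $h(i,N^*)^2$ bounded below by an $O(1)$ constant (it equals $0.25$ at an integer $N^*$ and is $\approx(1-\epsilon)^2$ at the first grid index above a half-integer $N^*$), so these summands are strictly negative; since $f_{\text{ter}}(x(N^*))\neq 0$ by the very definition of $N^*$, this yields a definite negative contribution of $O(1)$ magnitude near $N^*$. The summand at $i=N^*-1$ vanishes because $(i-N^*+1)=0$. For $i\leq N^*-2$ the odd exponent $2k-1$ makes $(i-N^*+1)^{2k-1}<0$, so these summands are positive; however each carries $h(i,N^*)^2$, which by the bracket condition of Definition~\ref{def:limitk} decays like $e^{-4k_{\epsilon}(N^*-i)}$ in the distance from $N^*$.

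The \emph{main obstacle} is the quantitative comparison of these positive tail terms against the $O(1)$ negative contribution near $N^*$: the power $|i-N^*+1|^{2k-1}$ grows polynomially in the distance $N^*-i$, and the residuals $\Vert f_{\text{ter}}(x(i))\Vert^2$ need not be monotone, so one must verify that the exponential decay of $h(i,N^*)^2$ dominates this polynomial growth uniformly and bounds $\sum_{i\leq N^*-2}$ by a quantity of order $\epsilon^2$ times $\max_i\Vert f_{\text{ter}}(x(i))\Vert^2$. This is precisely where the smallness of $\epsilon$ (equivalently the largeness of $k_{\epsilon}$) must be invoked. Granting this bound, the exponentially small positive contributions cannot overcome the strictly negative terms near $N^*$, so $\Sigma(x,N^*)<0$ — exactly what is needed to force $N^*\Delta t^2=-\Sigma>0$ in the subsequent coherence argument of Section~\ref{sec:cohn}.
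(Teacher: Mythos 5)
Your proof follows essentially the same route as the paper's: expand $w\,\nabla_{N^*}w$ into the nonpositive term $h\,\nabla_{N^*}h\,(i-N^*+1)^{2k}$ plus the term $-h^2(i-N^*+1)^{2k-1}$, observe the latter can only turn positive for $i \leq N^*-2$, and suppress that region using the $\epsilon$-smallness of $h$ guaranteed by Definition~\ref{def:limitk}. The quantitative tail bound you flag as the main obstacle is precisely the step the paper glosses over --- its proof simply treats $h(i<N^*-1,N^*)<\epsilon$ as vanishing --- so your version is, if anything, more explicit about where strict negativity can fail, a possibility the paper itself concedes in Section~\ref{sec:eval} when it notes that $\Sigma(x,N^*)$ can become positive for larger $\epsilon$.
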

\begin{proof}
We look at the expression
\begin{align}
	&\sum_{i=0}^{N-1} w(i,N^*)\nabla_{N^*}w(i,N^*) = \sum_{i=0}^{N-1} (i-N^*+1)^{2k}\\
	& \left(h(i,N^*)\nabla_{N^*}h(i,N^*) - \frac{kh(i,N^*)^2}{i-N^*+1}\right) < 0\nonumber
\end{align}
The first term is negative since $(i-N^*+1)^{2k}>0$, $h(i,N^*)>0$ and $\nabla_{N^*}h(i,N^*) < 0$. The second term is positive for all $i < N^*-1$ since we have $(i-N^*+1)^{2k-1} < 0$ on this interval. However, with the assumption $k=k_{\epsilon}$, $h(i<N^*-1, N^*)^2 < \epsilon$ can be neglected for $i<N^*-1$, effectively rendering the second term negative as well.
\end{proof}
We show in the evaluation (Sec.~\ref{sec:eval}) that we achieve good convergence even with rather large $\epsilon$ (and therefore not strict negativity of $\Sigma(x,N^*)$)

Importantly, due to the additional term $(i-N^*+1)$ in $w$ any $f_{\text{ter}}(x(i))\neq 0$ with $i> N^*$ contributes to the sum $\Sigma(x,N^*)$. This would not be the case for the pure heaviside function with $k=k_{\epsilon}$. Here, any Newton step $K_{N^*}(u + \Delta u)$, $K_{N^*}(x + \Delta x)$ or $K_{N^*}(N^* + \Delta N^*)$ would not lead to reduction of the error $w(i,N^*)f_{\text{ter}}(x(i))\neq 0$ with $i> N^*$.

The choice of the factor $1$ in $(i-N^*+1)^{k}$ ensures that the crossing of $h(N^*,N^*)(N^*-N^*+1)=0.5$ at $i=N^*$ is the same as the original desired function $h(N^*,N^*) = 0.5$ (see Fig.~\ref{fig:weight} for $h\cdot(i-N^*)^k$ as a counterexample). 

We now make a statement about the range of the obtained solution $N^*$.

\begin{theorem}
	\label{th:pos}
	The obtained $N^*$ is within the range $0\leq N^*\leq N-1$ for $k = k_{\epsilon} \geq 1$.
\end{theorem}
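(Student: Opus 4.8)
The plan is to read off both bounds from the single stationarity condition $K_{N^*} = N^*\Delta t^2 + \Sigma(x,N^*) = 0$ derived above, treating $K_{N^*}(N^*)$ as a scalar function of $N^*$ whose sign dictates the direction in which the minimizer of~\eqref{eq:topmls} moves. The lower bound is immediate: since Theorem~\ref{th:neg} gives $\Sigma(x,N^*) < 0$ for $k = k_\epsilon$, stationarity forces $N^*\Delta t^2 = -\Sigma(x,N^*) > 0$, so any $N^*$ with $K_{N^*}=0$ obeys $N^* > 0 \geq 0$. Equivalently, at $N^*=0$ we have $K_{N^*}(0) = \Sigma(x,0) < 0$, a negative gradient, so the objective of~\eqref{eq:topmls} strictly decreases as $N^*$ is increased away from $0$; the minimizer cannot sit at the left endpoint.

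For the upper bound I would show that the gradient $K_{N^*}(N^*)$ turns positive once $N^*\geq N$, which by the same reasoning pushes the minimizer back to the left of $N$. The key observation is that for $N^*\geq N$ every summation index $i = 0,\dots,N-1$ satisfies $i < N^*$, so with $k=k_\epsilon$ the heaviside approximation gives $h(i,N^*) < \epsilon$ and, through~\eqref{eq:nabh}, $\nabla_{N^*}h(i,N^*)$ decays exponentially in $N^*-i$. Using the factorization of $w(i,N^*)\nabla_{N^*}w(i,N^*)$ from the proof of Theorem~\ref{th:neg}, each summand of $\Sigma(x,N^*)$ is a product of these exponentially small factors with the polynomial $(i-N^*+1)^{2k}$ and the data $\Vert f_{\text{ter}}(x(i))\Vert^2$. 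Since exponential decay dominates polynomial growth, every summand vanishes and hence $\Sigma(x,N^*)$ becomes negligible for $N^*\geq N$; consequently $K_{N^*}(N^*) = N^*\Delta t^2 + \Sigma(x,N^*) > 0$ there. A positive gradient means the objective strictly decreases when $N^*$ is reduced, so no stationary minimizer of~\eqref{eq:topmls} can lie at $N^*\geq N$.

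I would then close the argument by continuity: $K_{N^*}(\cdot)$ is continuous, strictly negative at $N^*=0$ and strictly positive at $N^*=N$, so by the intermediate value theorem its root, the stationary point of~\eqref{eq:topmls}, is trapped in the open interval $(0,N)$. Moreover there is no root for $N^*<0$ (where $N^*\Delta t^2<0$ and $\Sigma<0$ give $K_{N^*}<0$) nor for $N^*>N$ (just shown), so the obtained $N^*$ lies in $(0,N)$, which yields the claimed $0\leq N^*\leq N$.

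The main obstacle I anticipate is making the upper-bound estimate fully rigorous for a finite threshold $k=k_\epsilon$ rather than in the idealized limit. For finite $\epsilon$ the factors $h(i,N^*)$ and $\nabla_{N^*}h(i,N^*)$ are only bounded by $\epsilon$ and not exactly zero, while the polynomial weights $(i-N^*+1)^{2k}$ and the a priori unbounded residuals $\Vert f_{\text{ter}}(x(i))\Vert^2$ can be large; to certify $K_{N^*}(N) > 0$ one must therefore establish $\vert\Sigma(x,N)\vert < N\Delta t^2$, i.e.\ that the $\epsilon$-dependent coefficients are small enough relative to $\Delta t^2$ and the residual magnitudes. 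The cleanest remedy is to carry out the estimate in the limit $\epsilon\to 0$ (equivalently $k_\epsilon\to\infty$), in which the exponential-over-polynomial products provably vanish; for practical finite $\epsilon$ the bound then holds approximately, consistent with the numerical evidence reported in Section~\ref{sec:eval}.
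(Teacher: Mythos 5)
Your proposal is correct and rests on the same two mathematical pillars as the paper --- the negativity of $\Sigma(x,N^*)$ from Theorem~\ref{th:neg} and the (idealized, $k=k_\epsilon$) vanishing of $\Sigma$ once all indices $i=0,\dots,N-1$ lie below $N^*$ --- but you package them differently. The paper argues \emph{dynamically}: it writes out the Newton step $\Delta N^* = -K_{N^*}/\nabla_{N^*}K_{N^*}$, checks via the second derivative $\nabla^2_{N^*}w(i,N^*)>0$ that the denominator is positive, and concludes that iterates with $N^*<0$ are pushed right and iterates with $N^*>N$ (where the sum in \eqref{eq:newtonn} vanishes) are pushed left; this directly speaks to the ``obtained'' $N^*$ as the output of the S-HLSP solver. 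You argue \emph{statically}: the sign of $K_{N^*}=N^*\Delta t^2+\Sigma(x,N^*)$ alone rules out stationary points outside $(0,N)$ (negative for $N^*\leq 0$ by Theorem~\ref{th:neg}, positive for $N^*\geq N$ since $\Sigma$ is negligible there), with the intermediate value theorem supplying existence of a root inside. This buys you two things: you never need the Hessian computation, and you sidestep the paper's slightly loose closing claim that the Newton step from $N^*>N$ lands exactly at $N^*+\Delta N^*=N$ (with the sum fully vanished, the raw Newton step $\Delta N^*=-N^*$ would overshoot to $0$ unless truncated where the terminal terms reactivate --- your nonexistence-of-stationary-points formulation avoids having to track this). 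The caveat you flag about finite $\epsilon$ --- that rigor for the upper bound requires $\vert\Sigma(x,N)\vert < N\Delta t^2$ rather than exact vanishing of the tails of $h$ --- is real but is shared by the paper itself, which treats $h$ and its derivatives as exactly zero away from $N^*$ at $k=k_\epsilon$ and even concedes in Section~\ref{sec:eval} that $\Sigma$ can become positive for large $\epsilon$; so your proposal is, if anything, more explicit about the approximation being made, and is at least as rigorous as the paper's own proof.
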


\begin{proof}
The Newton step $\Delta N^*$ with respect to $N^*$ is
\begin{align}
		&K_{N^*}(N^* + \Delta N^*) \approx K_{N^*}(N^*) + \Delta N^* \nabla_{N^*}K_{N^*}(N^*) =\nonumber\\
		&  N^*\Delta t^2 + \Sigma(x,N^*) + \Delta N^*\bigg(\Delta t^2 + \sum_{i=0}^{N}\label{eq:newtonn}\\
		& \left(\nabla_{N^*}w(i,N^*)^2 + w(i,N^*)\nabla_{N^*}^2w(i,N^*)\right)\Vert f_{\text{ter}}(x(i)) \Vert^2\bigg)\nonumber
\end{align}
For $k = k_{\epsilon} \geq 1$ and $N^*\leq 1$, the above sum is positive.
First, for $k = k_{\epsilon}$ any derivative of $h$ vanishes for $i\geq0$ and $N^*<0$.
We then have 
$\nabla^2_{N^*} w(i,N^*) =  h(i,N^*) k(k-1)    (i-N^*+1)^{k-2} \geq  0$
 which is positive for $i\geq 0$, $N^*\leq1$ and $k\geq 1$.
The Newton step with respect to $N^*$ then becomes
\begin{align}
&\Delta N^* = 
\frac{-N^*\Delta t^2 - \sum(x,N^*)}{  \Delta t^2 +
	\sum_{i=0}^{N}
		\left(\dots\right)
	\Vert f_{\text{ter}}(x(i)) \Vert^2}\hspace{-2pt} >\hspace{-2pt} 0\nonumber
\end{align}
with both positive numerator (using theorem~\ref{th:neg} for $\sum(x,N^*)<0$) and denominator.
This means that for any negative $N^*$ we get a positive Newton step $\Delta N^*$ until $N^* + \Delta N^* \geq 0$. For $N^* > N-1$ all sums in~\eqref{eq:newtonn} vanish entirely. The resulting  Newton step $\Delta N^* = -N^*< 0$ is negative until $N^* \leq N-1$.  
\end{proof}

The latter ensures that the constraint $f_{\text{ter}} = 0$ never entirely vanishes from the optimization problem.

\subsection{Convergence to the true discrete time-optimal control}
\label{sec:convadTOC}

We first make a statement about the possible set of KKT points within the given control horizon.
\begin{theorem}
		\label{th:kktpoint}
	Any $0\leq N^*\leq N-1$ poses a KKT point to ADTOC (Tab.~\ref{tab:topmaprx}) if any $\Vert f_{\text{ter}}(x(i))\Vert^2 > 0$, with $i\geq N^*$ and some $x$, poses a feasible point to the constraint polytope $\Omega$.
\end{theorem}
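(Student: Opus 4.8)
The plan is to fix an arbitrary $N^*\in[0,N]$ and then \emph{construct} a feasible pair $(x,u)$ for which all three stationarity conditions $K_x=K_u=K_{N^*}=0$ hold simultaneously; this is exactly what it means for this prescribed $N^*$, together with the constructed $x,u$, to pose a KKT point. The crucial point, in contrast to an argument that would instead solve $K_{N^*}=0$ \emph{for} $N^*$, is that here $N^*$ is held frozen and the degrees of freedom we exploit are the states and controls admitted by the feasibility hypothesis. I would first observe that $K_x$, $K_u$ and $K_{N^*}$ are precisely the components of the gradient of the level-2 objective $\Phi \coloneqq \tfrac{1}{2}\|[\,N^*\Delta t,\; w(0,N^*)f_{\text{ter}}(x(0)),\dots,w(N-1,N^*)f_{\text{ter}}(x(N-1))\,]\|^2$, so that a KKT point of the interior version of \eqref{eq:topmls} is simply a joint stationary point of $\Phi$.

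Next, to secure $K_x=K_u=0$, I would take $(x,u)$ to be a stationary point of $\Phi(\cdot,\cdot,N^*)$ over the feasible polytope $\Omega$ with $N^*$ frozen. The feasibility hypothesis guarantees this feasible set is nonempty, so such a stationary point exists; treating active control/state limits by the null-space reduction already invoked for \eqref{eq:topmls} turns this into interior stationarity, which is exactly $K_x=K_u=0$. By hypothesis at least one residual $f_{\text{ter}}(x(i))$ with $i\ge N^*$ can be kept strictly positive at a feasible configuration, i.e.\ the heavily weighted steps need not all vanish, and there is freedom in how large this residual is.

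It then remains to enforce $K_{N^*}=N^*\Delta t^2+\Sigma(x,N^*)=0$. Here I would use the freedom that the feasibility hypothesis leaves in the residual \emph{magnitude}: by Theorem~\ref{th:neg} we have $\Sigma(x,N^*)<0$ whenever a residual at some $i\ge N^*$ is nonzero, and from \eqref{eq:sigma} the quantity $-\Sigma$ depends continuously on these residual magnitudes, vanishing as they vanish and growing as they grow, its size amplified by the $(i-N^*+1)^{2k}$ factor in $w$. Since $N^*\Delta t^2\ge 0$ is a fixed and, for small $\Delta t$, modest nonnegative number, an intermediate-value argument applied to the residual magnitude (a degree of freedom inside $\Omega$, \emph{not} to $N^*$) yields a feasible configuration at which $-\Sigma(x,N^*)=N^*\Delta t^2$, i.e.\ $K_{N^*}=0$; the boundary case $N^*=0$ reduces to $\Sigma=0$ and is met when the goal is already attained.

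The step I expect to be the main obstacle is reconciling the last two: tuning the residual magnitudes to meet $K_{N^*}=0$ must not destroy the stationarity $K_x=K_u=0$. I would address this by arguing that the two requirements decouple, $K_{N^*}$ being governed by the \emph{magnitude} of the residuals at the heavily weighted steps $i\ge N^*$ through $w\,\nabla_{N^*}w$, while $K_x,K_u$ are stationarity conditions along the state and control directions that persist as long as the tuned configuration remains a stationary point of the frozen-$N^*$ weighted objective. Making this rigorous, for instance by exhibiting a one-parameter family of feasible, $(x,u)$-stationary configurations along which only the residual magnitude, and hence $\Sigma$, varies, is where the argument needs the most care; the feasibility hypothesis, which supplies precisely such admissible nonzero residuals at $i\ge N^*$, is what makes this family available.
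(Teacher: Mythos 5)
Your proposal follows essentially the same route as the paper: its own proof is a one-liner observing that $K_{N^*}=N^*\Delta t^2+\Sigma(x,N^*)$ can be zeroed for any $0\leq N^*\leq N$ because the positive term $N^*\Delta t^2$ is balanced against the negative $\Sigma(x,N^*)$ (Theorem~\ref{th:neg}) by choosing a feasible $x,u\in\Omega$ whose residual magnitudes at $i\geq N^*$ are suitable --- exactly your intermediate-value step on the residual magnitude. Your additional scaffolding (frozen-$N^*$ stationarity to secure $K_x=K_u=0$, and the flagged decoupling concern) goes beyond the paper, which silently delegates those components to the feasibility hypothesis, so your attempt is if anything more careful than the published proof.
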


\begin{proof}
	Since $N^*\Delta t^2>0$ and $\Sigma(x,N^*) < 0$,  we get $K_{N^*}=0$ for any $N^*$ if
	we can find a corresponding $x$, $u\in\Omega$ (and therefore $\Vert f_{\text{ter}}(x)\Vert$).
\end{proof}

With this foundation let's look at the overall convergence behavior of ADTOC  (Tab.~\ref{tab:topmaprx}).

\begin{theorem}
	\label{th:conv}
	ADTOC  (Tab.~\ref{tab:topmaprx}) represents \ref{eq:topm} if $\Delta t \rightarrow 0$, $k = k_{\epsilon}$ and the desired point $f_d$ is a resting point such that $\Vert f_{\text{ter}}(i> \hat{N}^*)\Vert^2 = 0$.
\end{theorem}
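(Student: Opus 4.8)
The plan is to show that the true \ref{eq:topm} solution $\hat{x}$, $\hat{u}$, $\hat{N}^*$ is the global KKT point of the least-squares problem~\eqref{eq:topmls} under the stated limits, so that minimizing the level-2 objective of ADTOC (tab.~\ref{tab:topmaprx}) reproduces the minimal-time DTOC solution. The whole argument rests on the step behavior of the weight $w$ at $k=k_{\epsilon}$: by Definition~\ref{def:limitk} we have $w(i,N^*)\approx 0$ for $i<N^*$ and $w(i,N^*)\approx(i-N^*+1)^k$ for $i>N^*$, so the weighted terminal terms act as a sharp selector that only penalizes residuals $f_{\text{ter}}(x(i))\neq 0$ with $i>N^*$.

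First I would verify stationarity in $x$ and $u$. Evaluating $K_x$ and $K_u$ at the DTOC solution, the resting-point hypothesis $\Vert f_{\text{ter}}(i>\hat{N}^*)\Vert^2=0$ annihilates every term with $i>\hat{N}^*$ (zero residual), while the $k_{\epsilon}$ step behavior annihilates every term with $i<N^*$ (zero weight). Taking $N^*$ at the transition $N^*\to\hat{N}^*$ therefore leaves no surviving contribution, so $K_x=K_u=0$; simultaneously this exhibits that the DTOC terminal constraint $f_{\text{ter}}(x(\hat{N}^*+1))=0$ is precisely what drives these residuals to zero.

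The core step is the horizon condition $K_{N^*}=N^*\Delta t^2+\Sigma(x,N^*)=0$ with $\Sigma$ from~\eqref{eq:sigma}. I would study $\Sigma$ as a function of $N^*$ under the resting-point hypothesis: for $N^*\geq\hat{N}^*$ every nonzero-weight index $i>N^*$ has $f_{\text{ter}}(x(i))=0$ and every nonzero-residual index $i\leq\hat{N}^*$ has $w\approx 0$, so $\Sigma\approx 0$; for $N^*<\hat{N}^*$ the last nonzero residual $f_{\text{ter}}(x(\hat{N}^*))\neq 0$ enters the high-weight region, so by Theorem~\ref{th:neg} $\Sigma$ is strictly and appreciably negative. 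The optimality condition then forces $\Sigma=-N^*\Delta t^2$, a vanishingly small negative quantity as $\Delta t\to 0$, which pins $N^*$ to the sharp transition at $\hat{N}^*$ (up to the half-integer crossing fixed by the bracket condition of Definition~\ref{def:limitk}). Consequently the optimal horizon time $T=N^*\Delta t$ converges to $\hat{N}^*\Delta t$, recovering the DTOC objective, and Theorem~\ref{th:kktpoint} guarantees admissibility of this point.

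The hard part will be promoting this from local stationarity to genuine global optimality and handling the discretization offset cleanly. I would close the global argument by comparing objective values: any $N^*<\hat{N}^*$ forces the term $(i-N^*+1)^{2k}\Vert f_{\text{ter}}(x(\hat{N}^*))\Vert^2>0$ into the cost, which dominates the $O((N^*\Delta t)^2)$ time savings for $k=k_{\epsilon}$, whereas any $N^*>\hat{N}^*$ carries zero residual but strictly larger time cost; hence $\hat{N}^*$ is the unique minimizer. The remaining subtlety is the half-step gap between the continuous $N^*$ and the integer $\hat{N}^*$ together with the residual $O(\epsilon)$ error of the finite $k_{\epsilon}$ step, both of which are controlled in the limit $\Delta t\to 0$, since the $0.5\Delta t$ offset becomes negligible relative to the total time $T$.
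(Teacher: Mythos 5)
Your proposal is correct and takes essentially the same route as the paper's proof: the same case split on $N^*$ relative to $\hat{N}^*$ (for $N^*\geq\hat{N}^*$ the resting-point hypothesis makes $\Sigma$ vanish so the time term pushes $N^*$ down; for $N^*<\hat{N}^*$ the nonzero weighted residual dominates the $O((N^*\Delta t)^2)$ time cost as $\Delta t\to 0$), the same localization of the global KKT point near the transition (the paper places it in the open interval $]\hat{N}^*,\hat{N}^*+1[$, matching your half-integer-crossing caveat), the same appeal to Theorem~\ref{th:kktpoint} for admissibility, and the same final comparison $\hat{N}^*_s\Delta t^2>\hat{N}^*\Delta t^2$ against sub-optimal resting trajectories. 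One small slip in your closing remark: the residual $O(\epsilon)$ error of the finite-$k_{\epsilon}$ step is \emph{not} controlled by $\Delta t\to 0$, since Definition~\ref{def:limitk} makes $k_{\epsilon}$ independent of $\Delta t$ --- it is controlled by choosing $\epsilon$ small, and the paper simply treats $k=k_{\epsilon}$ as an effectively exact step, as you do elsewhere.
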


\begin{proof}
	Assume the time-optimal control $\hat{u}$ and state $\hat{x}$ such that $\Vert f_{\text{ter}}(i> \hat{N}^*)\Vert^2 = 0$ (which implies a resting goal point $f_d$). 

First, we consider the case $N^* \geq \hat{N}^*+1$. Then the sum $\Sigma_{k_{\epsilon}}=0$~\eqref{eq:sigma} vanishes for all $N^* \geq \hat{N}^*+1$  since $\Vert f_{\text{ter}}(i> \hat{N}^*)\Vert^2 = 0$. The resulting Newton step~\eqref{eq:newtonn} is negative  $\Delta N^* < 0$ and drives $N^*\rightarrow \hat{N}^*+1$. 
	
Secondly, we consider the case $N^* \leq \hat{N}^*$.
The corresponding Lagrangian becomes $\mathcal{L}({N}^*) = 0.5({N}^{*}\Delta t)^2
+ \sum_{i=0}^{N^*}w(i,N^*)^2\Vert f_{\text{ter}}(x(i)) \Vert^2$. We now consider the limit case $\Delta t \rightarrow 0$. We then have $\mathcal{L}({N}^*) = \sum_{i=0}^{N^*}w(i,N^*)^2\Vert f_{\text{ter}}(x(i)) \Vert^2> 0$.
Furthermore, for $N^* = \hat{N}^*$ we have $\mathcal{L}(\hat{N}^*) = (\hat{N}^*\Delta t)^2$ and limit $\mathcal{L}(\hat{N}^*) \rightarrow 0$. Since $\mathcal{L}(N^*)\geq 0$ for all $N^*\in\mathbb{Z}_{\geq0}$ this is a global minimum $\min{\mathcal{L}}$. This means that a global KKT point is within the interval $N^*\in]\hat{N}^*,\hat{N}^*+1[$ since $\min{\mathcal{L}}$. Due to theorem~\ref{th:kktpoint} we therefore find an optimal point $\hat{x}, \hat{u}\in\Omega$ if it exists.

Now assume a sub-optimal (s) $\hat{u}_s$ and $\hat{x}_s$ that is optimal to $\hat{N}^*_s > \hat{N}^*$ with $\Vert f_{\text{ter}}(i > \hat{N}^*_s)\Vert^2 = 0$ but sub-optimal to $\hat{N}^*$. We then clearly have for the sub-optimal Lagrangian $\mathcal{L}_s(\hat{N}^*_s) = \hat{N}^*_s\Delta t^2 > \mathcal{L}(\hat{N}^*) =  \hat{N}^*\Delta t^2$. This means that the true time-optimal $\hat{N}^*$ poses a global optimum to ADTOC  (Tab.~\ref{tab:topmaprx}).

\end{proof}

Note that in the above we do not make any assumptions about the structure of $f_{\text{dyn}}$ or $f_{\text{ter}}$. Our algorithm is therefore applicable to both linear and non-linear systems as we demonstrate in the evaluation Sec.~\ref{sec:eval}. In the non-linear case, only a sub-optimal time-optimal control $\hat{N}_s > \hat{N}$ may be identified depending on the (local or global) convergence properties of the used optimizer.

\section{Evaluation}
\label{sec:eval}

We apply our method to both a linear point-mass  and a non-linear robot manipulator control example. 
In order to solve the \ref{eq:NL-HLSP}s, we use the sequential hierarchical least-squares programming solver S-HLSP with trust-region and hierarchical step-filter proposed in~\cite{pfeiffer2023b}. It is characterized by global convergence to a local KKT point. S-HLSP is based on the sparse HLSP solver s-$\mathcal{N}$IPM-HLSP~\cite{pfeiffer2021,pfeiffer2023b}. All gradients and Hessians of the dynamics and task functions are computed analytically, for example according to~\cite{carpentier2018,Jong2021}. We choose $\Delta t = 0.1$~s, $N = 25$ (or $\Delta t=0.01$~s with $N=100$ and $N=80$) and $k=4$ with $\epsilon = 3.4\cdot 10^{-4}$. While $\epsilon$ is relatively large (meaning that $\Sigma(x,N^*)$ can become positive, in contrast to theorem~\ref{th:neg}) we did not experience any negative influence on the algorithm convergence. 

\begin{table}[htp!]
	\centering
	\setlength{\tabcolsep}{1pt}
	\setlength{\extrarowheight}{1pt}
	\begin{tabular}{@{} ccccc @{}}  
		\toprule
		$l$ & $f_l(x,u) \leqq v_l$ \\
		\midrule
		& with $i = 0,\dots,N-1$\\
		1 & 
		$\BIN 
		u_i - u_{\max}\\
		u_{\min} - u_i\\
		\BOUT
		\leq v_{1,i+1}$ \\ 
		1 & 	$f_{\text{dyn}}(x_0,u_0,...,u_{N-1},x_{N}) = v_1 $\\
		{\color{blue}2} & 
		{\color{blue}$f_{\text{ter}}(x(N^*+1),x)=v_2$}\\
		{\color{red}2} & {\color{red}$\BIN 
		f_{\text{ter}}(x(N^*+1),x)\\
		\vdots \\
		f_{\text{ter}}(x(N),x)
		\BOUT=v_2$}\\
		3 & Regularization of $x$, $u$\\
		\bottomrule
	\end{tabular}
	\caption{DTOC (blue) and `padded' DTOC (PDTOC: red). $N^*$ is fixed.}
	\label{tab:topmaprxtest}
\end{table}

In order to confirm our results we compare our algorithm to the two optimization problems given in Tab.~\ref{tab:topmaprxtest}. 
Both problems' decision variables do not include $N^*$ which is rather fixed at a chosen value.
The first problem therefore corresponds to \ref{eq:topm} with the terminal constraint at $N^*+1$.
The other optimization problem is a `padded' version of DTOC (PDTOC) with terminal constraints on the collocation points $N^*+1$ to $N$. This reproduces the behavior of ADTOC  (Tab.~\ref{tab:topmaprx}) which can solve the original problem~\eqref{eq:topm} equivalently only if the desired state $f_d$ is a resting goal point (see theorem~\ref{th:conv}). Furthermore, we report the results of GPOPS~\cite{gpops} which implements a direct transcription method for optimal control problems of the form~\eqref{eq:conttopm}. The resulting NLP is solved by the local optimizer SNOPT~\cite{snopt}.

In order to preserve sparsity we set $w$ to zero when it is close to 0 up to a numerical threshold
\begin{equation}
	w(i,N^*) \leftarrow
	\begin{cases}
		0 & \text{if $w(i,N^*) < 10^{-20}$}\\
		w(i,N^*) & \text{otherwise}
	\end{cases}
\end{equation}
Note that we do not cut values that are close to 1 since this would reintroduce the discontinuity that we are trying to circumvent to begin with.

We display the variable $T^* = N^* \Delta t$ which indicates the optimal time at which still $f_{\text{ter}} \neq 0$ (the time ($N^*+1) \Delta t$ is the time where we first have $f_{\text{ter}} = 0$).

\subsection{Point mass}

\begin{figure}[htp!]
	\vspace{5pt}	
	\includegraphics[width=0.9\columnwidth]{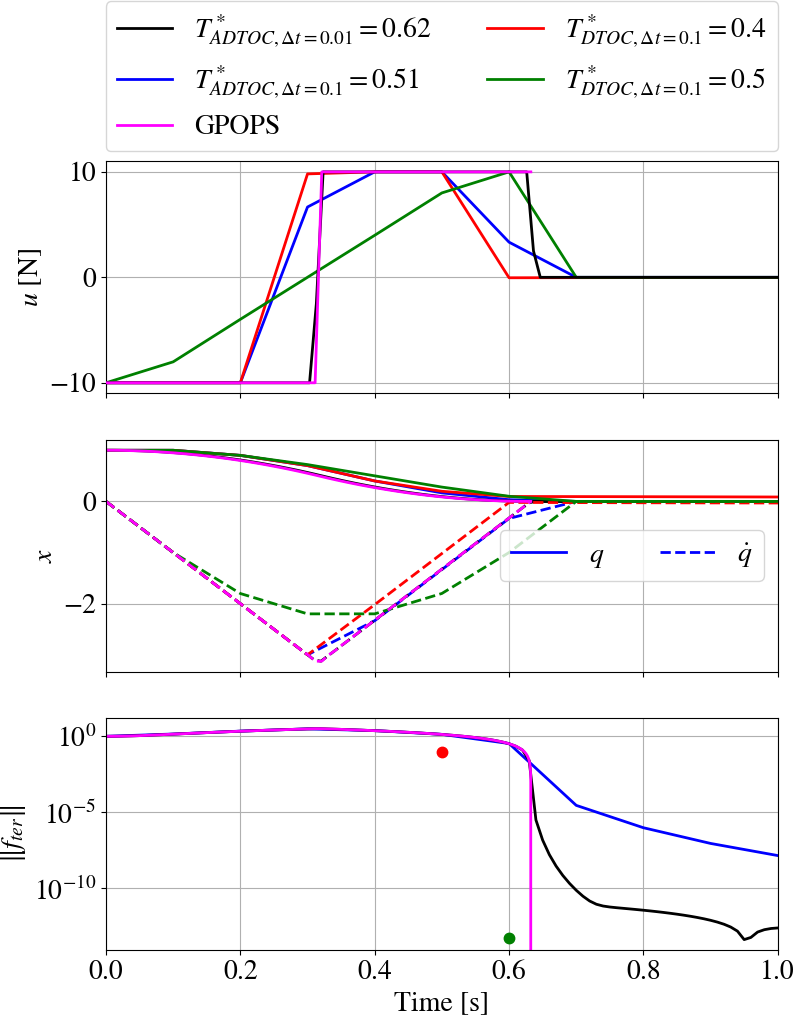}
	\centering
	\caption{Control $u$, states $x$ and norm of task error $\Vert f_{\text{ter}} \Vert$ for a linear point mass moving on a plane for ADTOC  (Tab.~\ref{tab:topmaprx}) ($T^*_{\text{ADTOC}}$) and DTOC with fixed $N^*$ (Tab.~\ref{tab:topmaprxtest}) ($T^*_{\text{DTOC}}$).}
	\label{fig:pm}
\end{figure}

First, we apply our method on a simple point mass of $m=1$~kg moving along a horizontal line with single-input force $u$ and state $x = \BIN q & \dot{q} \BOUT$. The state $q$ is the position of the point-mass. The state $\dot{q}$ is the point-mass velocity. The initial state is chosen as $x_0 = \BIN 1 & 0\BOUT$. The point-mass dynamics are linear in the state $x$ and control $u$ (using explicit Euler integration $x_{i+1} = x_i + \Delta t \dot{x}_i$ and $m\ddot{q}_{i} = u_i$ with $i=0,\dots,N-1$)
\begin{align}
	f_{\text{dyn}}(i) &=
	x_{i+1}
	+
	\BIN
	-1 & -\Delta t \\
	0 & -1
	\BOUT
	x_i
	-
	\BIN 
	0 \\
	\Delta / m
	\BOUT
	u_i
\end{align}
The control limit is chosen to $10$~N. We set our task function to $f_{\text{ter}} = x$ with $f_d = 0$. 

The results of ADTOC  (Tab.~\ref{tab:topmaprx}) are given in Fig.~\ref{fig:pm}. $N^*$ is identified as $N^*=5.1$ for $\Delta t=0.1$~s and $N^*=62$ for $\Delta t = 0.01$~s. As can be seen from the top graph, the identified control trajectories ($T^*_{\text{ADTOC},\Delta t = 0.1}=0.51$ and $T^*_{\text{ADTOC},\Delta t = 0.01}=0.62$) follow a bang-bang profile which indicates time-optimality. For $\Delta = 0.01$~s (computation time 4.5~s with 75 solver iterations, $N=100$) the control is at the limit $-10$~N until control time 0.3~s and then at the limit $10$~N from time 0.33~s. The control is zero from time 0.65~s onwards. A similar but less pronounced profile is observed for $\Delta t=0.1$~s (computation time 0.7~s with 64 solver iterations, $N=25$), seemingly confirming our convergence theorem~\ref{th:conv} of equivalence of ADTOC  (Tab.~\ref{tab:topmaprx}) with \ref{eq:topm} for $\Delta t \rightarrow 0$ (note that $\Delta t\rightarrow 0$ is prohibitive due to memory limitations since $N \rightarrow \infty$).

For comparison, we give the results of GPOPS (100 transcription nodes, computation time 0.88~s and 19 solver iterations). The corresponding control profile is very similar to that of our method with $\Delta t=0.01$~s but with a slightly sharper switching point from control on the lower bound $-10$~N to the upper bound $10$~N. This leads to a sharp drop of $\Vert f_{\text{ter}}\Vert$ at $t = 0.63$~s. On the other hand, the task error for our method $T^*_{\text{ADTOC},\Delta t = 0.01}=0.62$ stays elevated until 0.62~s but then declines sharply to $\Vert f_{\text{ter}} \Vert \approx 10^{-11}$. For $\Delta t \rightarrow 0$ (and therefore \ref{eq:topm} according to theorem~\ref{th:conv}) we would expect a behavior similar to GPOPS with a finite jump of $\Vert f_{\text{ter}} \Vert$ to zero between the corresponding control iterations $\hat{N}^*$ and $\hat{N}^*+1$. 

Furthermore, we give the results of DTOC (Tab.~\ref{tab:topmaprxtest}) with fixed $N^*=4$ and $N^*=5$. For $N^*=4$ ($T^*_{\text{DTOC},\Delta t = 0.1}=0.4$) the control profile is sharper than for $T^*_{\text{ADTOC},\Delta t = 0.1}=0.51$. The control then drops to zero at time 0.7~s. However, from the bottom graph it can be observed that the task error is only reduced to $\Vert f_{\text{ter}}\Vert = 0.1$. This is in contrast to the results for $T^*_{\text{DTOC},\Delta t = 0.1}=0.5$ where the task error is reduced to $\Vert f_{\text{ter}}\Vert = 10^{-13}$ at time 0.6~s. However, the corresponding control profile is very conservative and does not follow a bang-bang profile. A good middle ground between time optimality and error reduction is identified for ADTOC  (Tab.~\ref{tab:topmaprx}) $T^*_{\text{ADTOC},\Delta t = 0.1}=0.51$ (while not being as computationally expensive as $T^*_{\text{ADTOC},\Delta t = 0.01}=0.62$). The task error at time 0.6~s is approximately of order $\Vert f_{\text{ter}}\Vert = 5\cdot 10^{-4}$ which can be considered sufficiently accurate for this control application.

The task error for $T^*_{\text{DTOC},\Delta t = 0.1}=0.5$ declines the fastest with $\Vert f_{\text{ter}} \Vert \approx 10^{-13}$ at time 0.6~s. However, while this is closest to the desired solution of~\ref{eq:topm}, it does not fulfill higher order constraints like zero acceleration or jerk at convergence. These are implicitly fulfilled for ADTOC  (Tab.~\ref{tab:topmaprx}) and PDTOC with fixed $N^*$ in Tab.~\ref{tab:topmaprxtest}, depending on how much bigger $N > N^*$. Note that this is a matter of defining a more embracing task function $f_{\text{task}}$ including acceleration and jerk regularization. On the other hand, due to theorem~\ref{th:conv} requiring a resting goal point $f_d$, ADTOC  (Tab.~\ref{tab:topmaprx}) is not able to achieve the same results as DTOC with fixed $N^*$ (Tab.~\ref{tab:topmaprxtest}).

\subsection{Planar manipulator with two joints}

\label{sec:manip}

\begin{figure}[htp!]
	\vspace{5pt}
	\includegraphics[width=0.9\columnwidth]{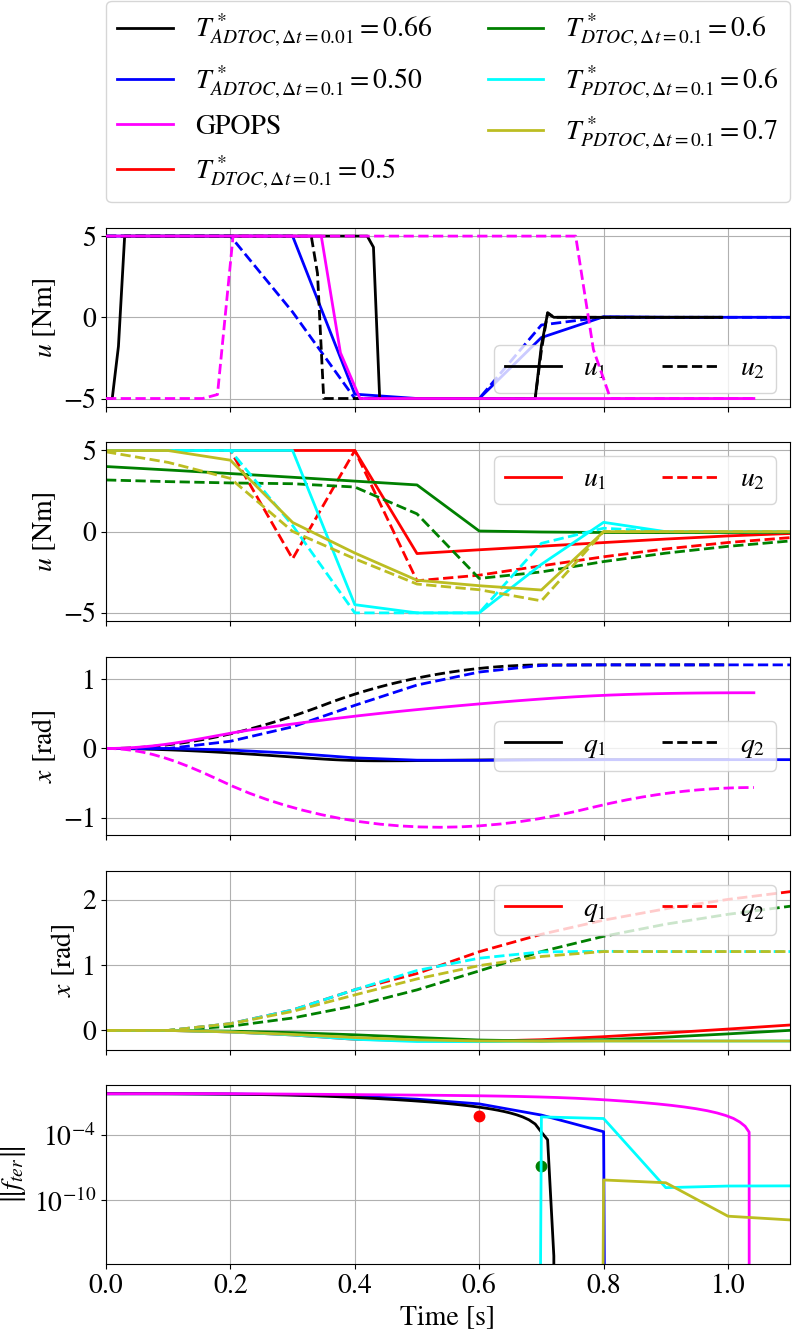}
	\centering
	\caption{Controls $u$, states $x$ and norm of task error $\Vert f_{\text{ter}} \Vert$ for a non-linear 2D manipulator with two joints for ADTOC  (Tab.~\ref{tab:topmaprx}), DTOC and PDTOC (Tab.~\ref{tab:topmaprxtest}).}
	\label{fig:manip2d}
\end{figure}

In this section we want to identify the time-optimal control for moving a 2D planar manipulator with fixed base and two joints from its initial task space position $f_{\text{task}}(x) = \BIN 2 & 0\BOUT^T$~m to $f_d = \BIN 1 & 1 \BOUT^T$~m. 
The robot's link lengths are given by $L_1 = 1.25$~m and $L_2 = 0.75$~m. The robot state is given by $x = \BIN q_{0,0} & q_{1,0} & \dot{q}_{0,0} & \dot{q}_{1,0} &\cdots & \dot{q}_{0,T-1} & \dot{q}_{1,T-1}\BOUT$ where $q_0$ and $q_1$ are the joint angles of joint one and two, respectively. The initial state is given by $x_0 = \BIN 0 & 0 & 0 & 0 \BOUT$.
Both joints are actuated by the torques $u=\BIN \tau_1 & \tau_2\BOUT^T$. The joint torques are limited to $5$~Nm.
We use the Newton-Euler equations as described in~\cite{Baccouch2020} with link masses $m_1 = m_2 = 1$~kg for the Euler explicit integration scheme $x_{i+1} = x_i + \Delta t\dot{x}_i$. Due to the more generic form of the dynamics of~\ref{eq:topm} compared to the continuous one~\eqref{eq:conttopm}, we do not rely on matrix inversion in order to obtain an explicit formulation of the joint accelerations $\ddot{q}_1$ and $\ddot{q}_2$. Instead, we use the dynamics $f_{\text{dyn}} = M (\dot{q}_{i+1} - \dot{q}_i) + \Delta t\{M\ddot{q}_i\}$ where $q \coloneqq \BIN q_1 & q_2\BOUT^T$, $M$ is the joint space inertia matrix and the vector $\{M\ddot{q}_i\}$ represents the inverse dynamics equations~\cite{carpentier2018}.

The results are given in Fig.~\ref{fig:manip2d}. Our algorithm ADTOC  (Tab.~\ref{tab:topmaprx}) with $\Delta t = 0.01$~s ($T^*_{\text{ADTOC},\Delta t = 0.01}=0.66$) recovers a sharp bang-bang control profile for both $\tau_1$ and $\tau_2$ (computation time 22~s with 121 solver iterations, $N=80$). The control vanishes at approximately 0.72~s with $\Vert f_{\text{ter}} \Vert\approx 10^{-15}$. This is in contrast to $\Delta t = 0.1$~s ($T^*_{\text{ADTOC},\Delta t = 0.1}=0.57$, computation time 1.5~s with 88 solver iterations, $N=25$) with the control and task error  vanishing only at approximately 0.8~s.

Next, we report the results for GPOPS with 80 collocation points computed in 7.55~s and 28 solver iterations. We have $T^*=1.04$~s (the last time with non-zero control and task error). As can be seen from the corresponding joint trajectory, a different inverse kinematics solution to the boundary condition $f_{\text{ter}} = 0$ is found which is less time-optimal than the one found for our method (note that there is an infinite number of solutions due to robot redundancy~\cite{Siciliano1991}). This is due to the non-linear optimizer not finding a better solution to the problem. Nonetheless, the bang-bang control profile is time-optimal with respect to the identified joint trajectory.

Furthermore, the results of the problem Tab.~\ref{tab:topmaprxtest} with fixed $N^*=5$ and $N^*=6$ are reported. For $N^*=5$ ($T^*_{\text{DTOC},\Delta t = 0.1}=0.5$) we have a limit control at the upper bound 5~Nm. However, the task error at 0.6~s is only decreased to $7\cdot 10^{-3}$. This is in contrast to the control profile of $T^*_{\text{DTOC},\Delta t = 0.1}=0.6$ with the more conservatively chosen terminal point $N^*=6$. There is enough time to reduce the error to $2\cdot10^{-7}$. However, from the conservative control profile it can be concluded that $N^*=6$ is not time-optimal. Unlike ADTOC  (Tab.~\ref{tab:topmaprx}) and PDTOC with known $N^*$ (Tab.~\ref{tab:topmaprxtest}), acceleration and jerk constraints are not implicitly fulfilled for DTOC with known $N^*$ (Tab.~\ref{tab:topmaprxtest}) and therefore enables quicker task convergence. This means that the point $f_d$ is reached only instantaneously at times 0.6~s and 0.7~s with divergence afterwards, as can be seen from the non-stationary state trajectories in the second graph from the bottom. 

The results of PDTOC (Tab.~\ref{tab:topmaprxtest}) with fixed $N^*=6$ and $N^*=7$ follow closely the ones of our method ADTOC (Tab.~\ref{tab:topmaprx}) with $\Delta t = 0.1$~s in terms of controls, states and task error reduction. Thereby, the same pattern as with the results for  Tab.~\ref{tab:topmaprxtest} is observed: for too low $N^*=6$ a bang-bang control profile is obtained but the task error is reduced insufficiently ($\Vert f_{\text{ter}}\Vert = 5\cdot 10^{-3}$); for too high $N^*=7$ the task error is reduced significantly ($\Vert f_{\text{ter}}\Vert = 8\cdot 10^{-9}$) but the control profile is not time-optimal / not a limit one.

\section{Conclusion}

In this work we have formulated, implemented and evaluated an easily implementable \ref{eq:NL-HLSP} for ADTOC. We showed that the method corresponds to the true DTOC in the limits and for resting goal points. This behavior was confirmed in simulations with linear and non-linear systems, recovering the bang-bang control profiles typically seen in TOC.

In future work we would like to investigate whether the current limitation to resting goal points can be relaxed for a broader applicability of our method. Furthermore, while we demonstrated the applicability of our method to whole-body motion control, it is limited by its computational complexity. We therefore would like to investigate methods to reduce the computational burden, especially with respect to large robotic systems like humanoid robots.


\section{Acknowledgments}

We would like to thank Dr. Bing Song for her insightful comments on our work.

	\balance
	\bibliographystyle{IEEEtran}
	\bibliography{bib}

\end{document}